 \newtheorem{thm}{Theorem}
\newtheorem{lem}[thm]{Lemma}
\newtheorem{ass}{Assumption}
\icmltitlerunning{Analysis of Minimax Error Rate for Crowdsourcing and Its Application to Worker Clustering Model}
\begin{document}

\twocolumn[
\icmltitle{Analysis of Minimax Error Rate for Crowdsourcing\\
and Its Application to Worker Clustering Model}


\icmlsetsymbol{equal}{*}

\begin{icmlauthorlist}
\icmlauthor{Hideaki Imamura}{ut,riken}
\icmlauthor{Issei Sato}{ut,riken}
\icmlauthor{Masashi Sugiyama}{riken,ut}
\end{icmlauthorlist}

\icmlaffiliation{ut}{The University of Tokyo, Tokyo, Japan}
\icmlaffiliation{riken}{RIKEN, Tokyo, Japan}

\icmlcorrespondingauthor{Hideaki Imamura}{imamura@ms.k.u-tokyo.ac.jp}

\icmlkeywords{Machine Learning, ICML, Crowdsourcing, Minimax Lower Bound}

\vskip 0.3in
]


\printAffiliationsAndNotice{The code for the experiment can be used in https://github.com/HideakiImamura/MinimaxErrorRate.\\}  

\begin{abstract}
	While crowdsourcing has become an important means to label data, there is great interest in estimating the ground truth from unreliable labels produced by crowdworkers. 
	The Dawid and Skene (DS) model is one of the most well-known models in the study of crowdsourcing.
	Despite its practical popularity, theoretical error analysis for the DS model has been conducted only under restrictive assumptions on class priors, confusion matrices, or the number of labels each worker provides.
	In this paper, we derive a minimax error rate under more practical setting for a broader class of crowdsourcing models including the DS model as a special case.
	We further propose the worker clustering model, which is more practical than the DS model under real crowdsourcing settings.
	The wide applicability of our theoretical analysis allows us to immediately investigate the behavior of this proposed model, which can not be analyzed by existing studies.
	Experimental results showed that there is a strong similarity between the lower bound of the minimax error rate derived by our theoretical analysis and the empirical error of the estimated value. 
\end{abstract}

\section{Introduction}
	Crowdsourcing has become an essential tool for large-scale data collection in machine learning.
	While crowdsourcing provides a less expensive means of labeling data, the data annotated by a crowd can be of low quality because crowd workers are often non-experts and are sometimes even adversarial.
	Many crowdsourcing services try to solve this problem by providing redundancy for labeling, i.e., by collecting multiple labels from different workers for each task \cite{Ipeirotis_et_al_2010, Welinder_et_al_2010, Snow_et_al_2008}.
	This strategy raises the question: how can the ground truths be estimated from noisy and redundant labels?	
	Estimating the ground truth is difficult in the {\it laissez-faire crowdsourcing setting} in which a large number of workers in the world can freely label as many tasks as they want.
	In this setting, a small number of workers annotate data a large number of times, while most workers perform annotation only a few times.
	That is,  the number of tasks to be labeled per worker typically follows Zipf's law. 
	In this paper, we focus on this setting because it is more realistic than the {\it non-laissez-faire crowdsourcing setting}, in which the number of tasks to be labeled per worker is almost constant across all workers and which is assumed in the experiments of many previous works \cite{Welinder_et_al_2010, Snow_et_al_2008}.

	In the context of estimating the ground truth of each task from noisy and redundant labels, \citet{Dawid_Skene_1979} conducted pioneering research.
	It is assumed in the Dawid and Skene (DS) model that each worker has his or her own confusion matrix.
	The ground truth of each task and the confusion matrix of each worker are jointly estimated by the Expectation-Maximization (EM) algorithm \cite{Dempster_et_al_1977}.	
	
	Although the DS model has had empirical success \cite{Welinder_et_al_2010, Snow_et_al_2008}, there are two major problems.
	First, there are few theoretical error analyses on the performance conducted for the DS model, and the existing theoretical analyses are only valid under strong assumptions.
	For example, \citet{Gao_et_al_2016} assumed that the class prior is uniform and  \citet{Zhang_et_al_2014} assumed that the entries of the confusion matrices are strictly positive.
	Second, the experimental validity of the existing methods is confirmed only when the number of tasks to be labeled per worker is almost constant on both synthetic and real-world data \cite{Welinder_et_al_2010, Snow_et_al_2008}.
	
	To alleviate the first problem, we provide a novel theoretical error analysis under milder assumptions based on Fano's method, which is a useful minimax lower-bounding technique \citep{Yu_1997}.
	 Our theoretical analysis is applicable to all models that use the ground truth of each task and the confusion matrix of each worker, including the DS model and its extensions, thanks to the fact that our assumptions are much weaker than those required in previous work.
	 
	To alleviate the second problem, we extend the DS model to be able to handle worker clusters.
	Intuitively, by clustering workers, even when the number of tasks to be labeled per worker is small, the number of tasks to be labeled per \emph{worker cluster} can be increased and thus estimation can be stabilized.
	Note that our widely applicable theoretical analysis explained above allows us to investigate the behavior of the proposed clustering model, while the existing theoretical analysis methods cannot be used due to their restrictive assumptions.
	We experimentally show the usefulness of our worker clustering (WC) model for reducing the influence of variations in the number of tasks to be labeled per worker.
	We also numerically confirm the validity of our theoretical error analysis.
	 
\section{Related Work}
		\begin{table*}[ht]
			\caption{Relationship between existing work and this work.
			DS stands for Dawid and Skene, while WC stands for worker clustering.}
				\label{ta: rw}
				\vspace*{-1mm}
				\begin{center}
				\begin{small}
				\begin{sc}
					\scalebox{1.0}[1.0]{
					\begin{tabular}{l||c | c |c | c}
						\toprule
						  & Model & Theoretical analysis & laissez-faire setting & Class prior   \\
						\midrule
						\citet{Dawid_Skene_1979} & DS & - & - & Not uniform \\ \hline
						\citet{Karger_et_al_2011} & Other & \checkmark & - & No class prior \\ \hline
						\citet{Liu_Wang_2012} & DS & - & - & Not uniform \\ \hline 
						\citet{Gao_et_al_2016} & DS & \checkmark & - & Uniform \\ \hline 
						This work & DS \& WC & \checkmark & \checkmark  & Not uniform \\ \hline
						\bottomrule
					\end{tabular}
					}
				\end{sc}
				\end{small}
				\end{center}
				\vskip -0.1in
		\end{table*}
		
	A large number of studies on the quality assurance of data collected by crowdsourcing have been conducted 
	\cite{Karger_et_al_2011,
		Bachrach_et_al_2012,
		Zhou_et_al_2012,
		Chen_et_al_2013,
		Karger_et_al_2013,
		Parisi_et_al_2014,
		Venanzi_et_al_2014,
		Karger_et_al_2014,
		Tian_et_al_2015}.	
     	One of the most practical and pioneering studies in this field is the Dawid and Skene (DS) model \cite{Dawid_Skene_1979}.
	The DS model is based on an estimation paradigm that uses the ground truths of all tasks and the confusion matrices of all workers; their inference algorithm is based on the EM algorithm
	\cite{Dempster_et_al_1977}.
	Application research using their method has been actively carried out \cite{Hui_Walter_1980, Smyth_et_al_1995, Albert_Dodd_2004}.
	As many experiments on synthetic and real-world data have demonstrated, the DS model is practical for estimating the ground truth of each task from noisy and redundant labels.
	
	There is also many studies that uses the EM algorithm
	\cite{Whitehill_et_al_2009,
		Welinder_et_al_2010,
		Rayker_et_al_2010, 
		Welinder_Perona_2010,
		Liu_et_al_2012,
		Liu_Wang_2012,
		Zhang_et_al_2014}.
	\citet{Liu_Wang_2012} assume priors over the class prior of the ground truths of tasks and the confusion matrices of workers.
	\citet{Zhang_et_al_2014} devised an effective way to initialize the EM algorithm.
	Specifically, the initial values of the confusion matrices of all workers were estimated using the method of moments.
	\citet{Karger_et_al_2011} proposed an iterative algorithm for binary labeling problems which is not based on the DS model, and gave a theoretical analysis under strong assumptions.
	
	The idea of clustering worker was already proposed by \citet{Venanzi_et_al_2015} and \citet{Moreno_et_al_2015}. 
	The problem setting of \citet{Venanzi_et_al_2015} supposed that the inputs of the model are not labels but real-valued vectors, which is different from our problem setting.
	\citet{Moreno_et_al_2015} solved the same problem as ours, but assumed a complex generation process for confusion matrices and labels given by workers. 
	To give the worker clustering structure, their model becomes complex, and difficult to analyze theoretically. 

	As a number of algorithms have been proposed, statistical understanding of crowdsourcing has been actively researched, such as
	\cite{Ghosh_et_al_2011,
	Dalvi_et_al_2013,
	Karger_et_al_2014,
	Zhang_et_al_2014,	
	Gao_et_al_2016,
	Bonald_Combes_2017}.
	These studies are divided into two types.
	One is a theoretical analysis of the estimation paradigm that uses the ground truths and confusion matrices
	\cite{Zhang_et_al_2014,
	Gao_et_al_2016}.
	The other is a theoretical analysis of their own models and algorithms
	\cite{
	Ghosh_et_al_2011,
	Dalvi_et_al_2013,
	Karger_et_al_2014,
	Bonald_Combes_2017}.
	Our theoretical analysis is of the former category.
	
	\citet{Zhang_et_al_2014} initialized the confusion matrices of all workers using the method of moments and showed convergence of the EM algorithm initialized with their method.
	However, they assumed that the minimum value of the entries of the confusion matrices is greater than a positive constant.
	\citet{Gao_et_al_2016} derived the minimax optimal convergence rate for the DS model, but they assumed that the class prior is uniform.
	
	\citet{Bonald_Combes_2017} gave a probabilistic concentration inequality for the estimation error between the ground truth and estimated truth without using the confusion matrices of workers. 
	While their result cannot be calculated empirically, our theoretical result can be calculated using the estimated confusion matrices and the class prior. 
	We describe the usefulness of calculating the bound in Section \ref{sec: ex_col}
	Because of our weaker assumptions, our theoretical analysis is applicable to all methods that use the estimation paradigm based on the ground truths of all tasks and confusion matrices of all workers.
	
\section{Dawid and Skene Model} \label{sec: mo}
	In this section, we formulate the problem of estimating the ground truths of tasks based on noisy and redundant labels provided by workers and their confusion matrices.
	The model formulated in this section is the basis of all models to be theoretically analyzed in the next section.

	Suppose we have $n$ tasks labeled by $m$ workers on $K$ possible labels.
	Let $X_{i, j}$ be the label of the $i$-th task given by the $j$-th worker.
	Denote by $X_{i, j}=k$ that the $j$-th worker labels $k \in \{1,2, \ldots,K\}$ to the $i$-th task.
	Denote by $X_{i,j} = 0$ that the $j$-th worker does not label the $i$-th task.
	We use $G_i \in \{ 1, 2, \ldots, K \}$ to denote the ground truth of the $i$-th task.
	Here, $\mathbb{S}^{K \times K}$ is the set of right stochastic matrices, that is, 
	\begin{eqnarray*}
		\forall P \in \mathbb{S}^{K \times K},  \forall k \in \{1, 2, \ldots, K\}, \sum_{k' = 1}^K P_{k, k'} = 1,
	\end{eqnarray*}
	\begin{eqnarray*}
		\forall P \in \mathbb{S}^{K \times K},  \forall k \in \{1, 2, \ldots, K\}, \forall k' \in \{ 1, 2, \ldots, K\}, P_{k, k'} \ge 0.
	\end{eqnarray*}
	The ability of the $j$-th worker is measured by a confusion matrix $\pi^j \in \mathbb{S}^{K \times K}$ with its $(k, k')$-element $\pi^j_{k, k'}$ being the probability that the $j$-th worker labels $k'$ when the true label is $k$. 
	For simplicity, we define 
	\begin{eqnarray*}
		X = \{ X_{i,j}\}_{i=1, j=1}^{n, m}, 
		G = \{ G_i \}_{i=1}^n,
		\pi = \{ \pi^j \}_{j = 1} ^m.
	\end{eqnarray*}
	
	We suppose that when $G_i = k$ is the ground truth of the $i$-th task, the label given by the $j$-th worker is sampled from a multinomial distribution  parametrized by the $k$-th row of the confusion matrix of the $j$-th worker, namely, $\pi^j_k$. 
	We assume that the ground truth of each task, $G_i$, is sampled from a multinomial distribution parametrized by $\rho=(\rho_1,\rho_2,\ldots,\rho_K)$, where $\rho_k \ge 0$ for any $k \in \{ 1, \ldots, K\}$ and $\sum_{k=1}^K \rho_k = 1$.
	We call $\rho$ the class prior of all tasks.
	We have observed variables $X$, latent variables $G$, and parameters $\{ \pi, \rho\}$.
	The graphical model is plotted in Figure \ref{graphical}-(a).
	
	Then, the joint distribution of this model is expressed as follows.
	\begin{eqnarray*}
		p(X, G | \rho, \pi) = \left[ \prod_{i=1}^n \prod_{j=1}^m p(X_{i.j} | G_i, \pi^j)\right] \left[ \prod_{i=1}^n p(G_i | \rho) \right],
	\end{eqnarray*}
	where each probability distribution is given as follows.
	\begin{eqnarray*}
		p(X_{i,j} | G_i, \pi^j) &=&\prod_{k'=1}^K \left( \pi^j_{G_i, k'} \right)^{\delta(X_{i,j} = k')},\\
		p(G_i | \rho) &=& \prod_{k=1}^K \rho_k ^ {\delta(G_i = k)}.
	\end{eqnarray*}
	
	The aim is to predict the ground truths of all tasks, $G$, from observed labels $X$.
	\citet{Dawid_Skene_1979} used the EM algorithm to infer the ground truths.
	However, when we conduct theoretical analysis in the next section, we take an approach that does not depend on a specific inference algorithm.
\begin{figure*}[th]
\vspace*{-6mm}
\begin{center}
$
\vspace*{-5mm}
\begin{array}{cc}
\vspace*{-2mm}
\includegraphics[width=40mm, angle=-90]{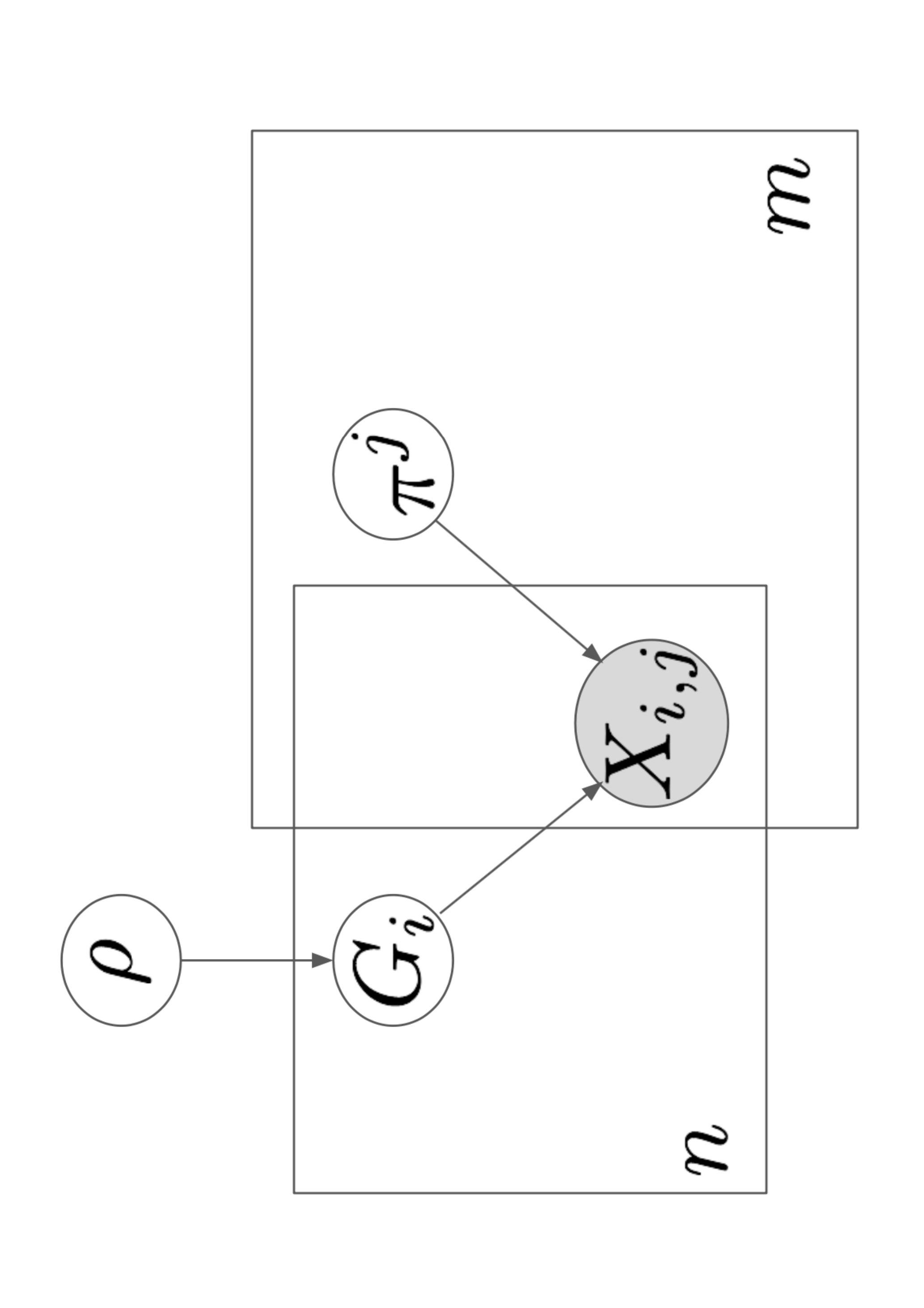}&
\includegraphics[width=40mm, angle=-90]{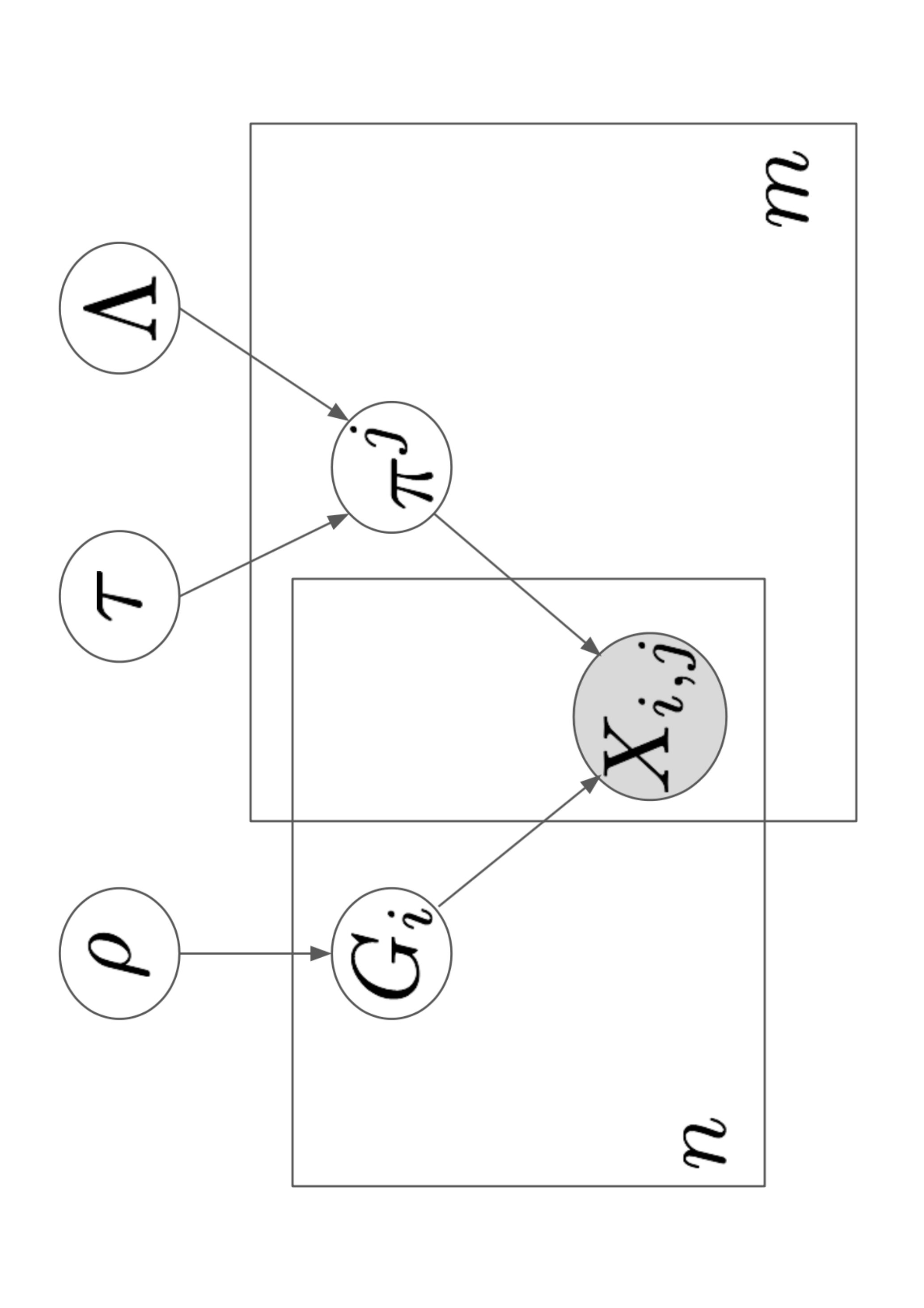}
\\
({\rm a})\:  {\rm The \: DS \: model. }&
({\rm b})\: {\rm The \: WC \: model. }
\end{array}
$
\end{center}
\caption{
Graphical models.
}
\label{graphical}
\end{figure*}

\section{Minimax Error Analysis} \label{sec: th}
	We give a lower bound on the minimax error for a class of models that use the ground truths of tasks and confusion matrices of workers.
	Let us define some concepts and formulas for analysis.
	
	We focus on the model described in Section \ref{sec: mo}.
	Note that the model may optionally contain additional structures, such as a prior for the class prior \cite{Liu_Wang_2012}.
	Moreover, we can use any inference algorithm to estimate the ground truths and class prior of tasks and the confusion matrices of workers.
	Let $\hat{G}$ be estimated truths of all tasks; let $\hat{G}_i$ be an estimate of the ground truth of the $i$-th task.
	The set of $\{ 1, \ldots, K \}$ is denoted by $[K]$.
	A loss is measured by the error rate given by
	\begin{eqnarray*}
		\mathscr{L}(\hat{G}, G) &=& \frac{1}{n} \sum_{i=1}^n \delta(\hat{G}_i \neq G_i),
	\end{eqnarray*}
	where $\delta(\cdot)$ is the indicator function.
	Let $\mathbb{P}$ be the joint probability distribution of the data $\{ X_{i,j} \}$ given $\pi, \rho$ and $G$.
	Let $\mathbb{E}$ be the associated expectation operator.
	Denote by $H(\rho)$ the entropy of the ground truth of each task with respect to the class prior:
	\begin{eqnarray*}
		H(\rho) &=& - \sum_{k=1}^K \rho_k \log \rho_k.
	\end{eqnarray*}
	Let $\mathrm{KL}(\pi^j_{g *} || \pi^j_{g' *} )$ be the Kullback-Leibler divergence from one row of the confusion matrix to the other:
	\begin{eqnarray*}
		\mathrm{KL}(\pi^j_{g *} || \pi^j_{g' *} ) &=&  \sum_{k=1}^K \pi^j_{g k} \log \frac{\pi^j_{g k}}{\pi^j_{g' k}}.
	\end{eqnarray*}
	
	We assume the following.
	\begin{ass} \label{ass: a}
		Given $\rho$ and $\pi$, the labels of tasks given by workers are in accordance with the models that use the ground truths of tasks and confusion matrices of workers such as the DS model.
	\end{ass}
	We bound the minimax error rate as follows.
	\begin{thm} \label{thm: bound}
	Under Assumption \ref{ass: a}, the minimax error rate is lower bounded as follows.
	\begin{eqnarray*}
	\inf_{\hat{G}} \sup_{G \in [K]^n} \mathbb{E} [\mathscr{L}(\hat{G}, G)] \ge
		\frac{1}{n \log K}\left( R(\rho, \pi) - \frac{\log 2}{n}\right),
	\end{eqnarray*}
	where
	\begin{eqnarray*}
		R(\rho, \pi) &=& H(\rho) - \sum_{j=1}^m \sum_{g=1}^K \sum_{g'=1}^K \rho_g \rho_{g'} \mathrm{KL}(\pi^j_{g *} || \pi^j_{g' *} ).
	\end{eqnarray*}
	\end{thm}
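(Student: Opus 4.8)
The plan is to prove this via Fano's method, reducing the adversarial supremum to a Bayes risk under the natural task prior and then lower-bounding the resulting conditional entropy. First, I would replace the supremum over $G \in [K]^n$ by an average under the prior in which each $G_i$ is drawn i.i.d.\ from the multinomial distribution with parameter $\rho$, since the worst case is never smaller than any Bayes risk:
\[
\inf_{\hat{G}} \sup_{G \in [K]^n} \mathbb{E}[\mathscr{L}(\hat{G}, G)] \ge \inf_{\hat{G}} \mathbb{E}_{G}\, \mathbb{E}_{X \mid G}[\mathscr{L}(\hat{G}, G)].
\]
Because $\mathscr{L}(\hat{G}, G) = \frac{1}{n}\sum_{i} \delta(\hat{G}_i \neq G_i)$, the right-hand side equals $\frac{1}{n}\sum_{i} \mathbb{P}(\hat{G}_i \neq G_i)$, so it suffices to lower-bound this average per-task error in terms of $R(\rho, \pi)$.

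The heart of the argument is an information-theoretic lower bound on $H(G \mid X)$. Under the generative model of Section~\ref{sec: mo}, the posterior factorizes across tasks, $p(G \mid X) = \prod_i p(G_i \mid X_{i, \cdot})$, because $G_i$ influences only the labels $X_{i, \cdot} = (X_{i,1}, \dots, X_{i,m})$ of its own task; hence $H(G \mid X) = \sum_{i} H(G_i \mid X_{i, \cdot})$. For a single task, $H(G_i \mid X_{i, \cdot}) = H(\rho) - I(G_i; X_{i, \cdot})$, so I would upper-bound the mutual information. Since the labels $X_{i,1}, \dots, X_{i,m}$ are conditionally independent given $G_i$, subadditivity of entropy yields $I(G_i; X_{i, \cdot}) \le \sum_{j} I(G_i; X_{i,j})$. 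Writing the marginal label distribution as $\bar{\pi}^j = \sum_{g'} \rho_{g'} \pi^j_{g' *}$, the identity $I(G_i; X_{i,j}) = \sum_{g} \rho_g \mathrm{KL}(\pi^j_{g *} \| \bar{\pi}^j)$ together with the convexity of $\mathrm{KL}(p \| \cdot)$ in its second argument gives $\mathrm{KL}(\pi^j_{g *} \| \bar{\pi}^j) \le \sum_{g'} \rho_{g'} \mathrm{KL}(\pi^j_{g *} \| \pi^j_{g' *})$. Combining these and summing over $j$ (workers who do not label task $i$ only add non-negative terms, keeping the bound valid) shows $I(G_i; X_{i, \cdot}) \le \sum_{j} \sum_{g} \sum_{g'} \rho_g \rho_{g'} \mathrm{KL}(\pi^j_{g *} \| \pi^j_{g' *})$, and therefore $H(G_i \mid X_{i, \cdot}) \ge R(\rho, \pi)$. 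Summing over the $n$ tasks gives $H(G \mid X) \ge n R(\rho, \pi)$.

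Finally, I would close with Fano's inequality applied to the whole block $G$, treated as a single symbol over the alphabet $[K]^n$ of size $K^n$. Writing $P_{\mathrm{b}} = \mathbb{P}(\hat{G} \neq G)$ for the block error, Fano gives $H(G \mid X) \le \log 2 + P_{\mathrm{b}} \log(K^n - 1) \le \log 2 + P_{\mathrm{b}}\, n \log K$, so $P_{\mathrm{b}} \ge (n R(\rho, \pi) - \log 2)/(n \log K)$. A union bound then relates the block error to the average per-task error, $P_{\mathrm{b}} = \mathbb{P}(\exists\, i: \hat{G}_i \neq G_i) \le \sum_i \mathbb{P}(\hat{G}_i \neq G_i) = n\, \mathbb{E}[\mathscr{L}(\hat{G}, G)]$, hence $\mathbb{E}[\mathscr{L}(\hat{G}, G)] \ge P_{\mathrm{b}}/n \ge \frac{1}{n \log K}\big(R(\rho, \pi) - \frac{\log 2}{n}\big)$, which is the claimed bound.

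I expect the main obstacle to be the mutual-information step: the exact $I(G_i; X_{i,j})$ involves the logarithm of the mixture $\bar{\pi}^j$, which is awkward to control directly, and the key move is the convexity bound that converts it into the clean pairwise-KL double sum appearing in $R(\rho, \pi)$. The Fano and union-bound steps are routine, though it is worth noting that this block argument is deliberately conservative and is precisely what produces the $\tfrac{1}{n}$-scaling and the $\tfrac{\log 2}{n}$ correction in the statement; a per-coordinate application of Fano would in fact yield a tighter constant.
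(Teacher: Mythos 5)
Your proof is correct, and it shares the paper's overall skeleton: reduce the minimax risk to a Bayes risk under the prior in which each $G_i \sim \rho$ i.i.d., apply Fano's inequality to the whole block $G$ over the alphabet $[K]^n$, lower-bound the resulting conditional entropy by $n R(\rho, \pi)$, and convert block error into Hamming error (your union bound and the paper's Markov inequality are the same pointwise fact $n\mathscr{L}(\hat{G},G) \ge \delta(\hat{G}\neq G)$). Where you genuinely diverge is in how the entropy bound is organized. The paper stays global: it writes $H(G\mid\hat{G}) \ge H(G) - I(G;X)$ via data processing, bounds $I(G;X)$ in one shot by $\sum_{G,G'} P(G)P(G')\,\mathrm{KL}\left( P(X\mid G)\,\middle\|\,P(X\mid G')\right)$ using convexity of $-\log$, and only afterwards factorizes the KL over tasks and workers. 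You instead factorize first, $H(G\mid X) = \sum_i H(G_i\mid X_{i,\cdot})$, then use subadditivity $I(G_i;X_{i,\cdot}) \le \sum_j I(G_i;X_{i,j})$ and the mixture representation of mutual information plus convexity of KL in its second argument; both computations land on the identical intermediate inequality $H \ge n R(\rho,\pi)$. Your organization buys some extra rigor on two points the paper glosses over: the explicit passage from the supremum over deterministic $G$ to the $\rho$-average (the paper applies Fano to $\sup_G \mathbb{P}(\hat{G}\neq G)$ without noting that $G$ must be randomized for $H(G\mid\hat{G})$ to be meaningful), and the treatment of workers who do not label a given task, which your per-worker decomposition handles cleanly as zero-information terms. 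One caveat on your closing aside: a per-coordinate application of Fano has $\log(K-1)$ in the denominator, which vanishes for $K=2$, and the crude bound $h(P_e)\le\log 2$ per coordinate can make $R(\rho,\pi)-\log 2$ negative even when the bound is informative at the block level; so the claim that the per-coordinate version "would yield a tighter constant" requires qualification, particularly for binary tasks.
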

	
	The details of the proof are given in Appendix \ref{sec: pr}.
	The proof of this theorem is based on Fano's method by \citet{Yu_1997}, which is a well-known minimax lower bounding technique.
	
	Assumption \ref{ass: a} is weaker than that of many previous works such as \citet{Gao_et_al_2016} and \citet{Zhang_et_al_2014}.
	As we mentioned in the introduction, their approaches make more restrictive assumptions to conduct theoretical analysis.
	For example, \citet{Gao_et_al_2016} assumed that class prior $\rho$ is uniform, that is, $\rho_k = \frac{1}{K}$ for any $k \in \{ 1, \ldots, K \}$.
	 \citet{Zhang_et_al_2014} assumed that all entries $\pi^j_{g,k}$ of the confusion matrix is strictly positive.
	In contrast, thanks to the weak assumptions, our lower bound is applicable to the DS mode and other models that use the ground truths of tasks and confusion matrices of workers such as \citet{Zhang_et_al_2014} and \citet{Liu_Wang_2012}.

	Our lower bound can be used to measure the performance of each model.
	Specifically, the performance of each model can be measured by the value of $R(\rho, \pi)$, which is the main part of the lower bound of the minimax error rate.
	We conducted numerical experiments to measure the performance of each model,  the results  of which are given in the experiment section.
	
	In this paper, we provided the lower bound of the minimax error rate but did not give an upper bound.
	The theoretical analysis of the previous work such as \citet{Gao_et_al_2016}  easily derived an upper bound under mild assumptions.
	However, the derivation  depends not only on the model, but also on an inference algorithm, because $\inf_{\hat{G}} \sup_{G \in [K]^n} \mathbb{E} [\mathscr{L}(\hat{G}, G)]$ includes the infimum over estimate $\hat{G}$.
	Therefore, it is not appropriate to use such an upper bound when analyzing the behavior of the model itself.

\section{Worker Clustering Model} \label{sec: wc}
	We propose a model that is more practical than the DS model in the {\it laissez-faire crowdsourcing setting}, where workers can label as many tasks as they want.
	The main idea behind our model is simple: divide all workers into several disjoint clusters.
	Intuitively, by clustering workers, even if the number of labels provided by each worker is small, the number of labels per {\it worker cluster} can be increased to stabilize inference.
	
	The definition of the labels given by workers $X$, the ground truths of all tasks $G$, the class prior $\rho$, and the confusion matrices of all workers $\pi$ are the same as that of  the DS model. 
	The proposed model can actually be regarded as an extension of the DS model and the HybridConfusion model \cite{Liu_Wang_2012}.
	The biggest difference from the previous models is that the proposed model limits the possible values of confusion matrices $\pi$ to a maximum of $L$ values $\Lambda = \{ \Lambda_1, \Lambda_2, \ldots, \Lambda_L\}$, where $L \le m$.
	This means that we are clustering $m$ workers into a maximum of $L$ groups.	
	We also suppose that the confusion matrix $\pi^j$ is determined by a multinomial distribution parametrized by $\tau = (\tau_1, \tau_2, \ldots, \tau_L)$, that is, $\pi^j$ is equal to $\Lambda_l$ with probability $\tau_l$.
	This means that the $j$-th worker belongs to the $l$-th cluster with probability $\tau_l$. 
	We have observed variables $X$, latent variables $\{ G, \pi \}$ and parameters $\{ \Lambda, \rho,  \tau \}$.
	The graphical model is shown in Figure \ref{graphical}-(b).
	
	The joint distribution of this model is expressed as follows.
	\begin{align*}
		&p(X, G, \pi | \rho, \tau, \Lambda)\\
		=&\left[ \prod_{i=1}^n \prod_{j=1}^m p(X_{i.j} | G_i, \pi^j)\right] 
			\left[ \prod_{i=1}^n p(G_i | \rho) \right]
			\left[ \prod_{j=1}^m p(\pi^j | \tau, \Lambda)\right],
	\end{align*}
	where each probability distribution is given as follows.
	\begin{eqnarray*}
		p(X_{i,j} | G_i, \pi^j) &=& \prod_{k'=1}^K \left( \pi^j_{G_i, k'} \right)^{\delta(X_{i,j} = k')},\\
		p(G_i | \rho) &=& \prod_{k=1}^K \rho_k ^ {\delta(G_i = k)},\\
		p(\pi^j | \tau, \Lambda) &=& \prod_{l=1}^L \tau_l ^ {\delta(\pi^j = \Lambda_l)}.
	\end{eqnarray*}

	Note that our theoretical analysis given in Section \ref{sec: th} is also applicable to our proposed WC model.
	To apply the lower bound to our model, we use  $\Lambda$ instead of the confusion matrices as follows.
	We assume that $\Lambda$, class prior $\rho$, and a group $\ell_j $ which each worker belongs to  are known.
	Under this assumption, $\hat{\phi}_{j,l} = 1$ when $l = \ell_j$ and otherwise $\hat{\phi}_{j,l}=0$.
	Our lower bound of the minimax error rate is then established with $\Lambda_{\ell_j}$ instead of a confusion matrix $\pi^j$.
	
	To estimate latent variables and optimize parameters, we  adopt the strategy of \textit{empirical variational inference} \cite{Robbins_1956}.
	The details of the derivations are in Appendix B in the supplementary material.
	 
	The behavior of the WC model when $L$ gradually increases is as follows.
	Since $L$ is the maximum number of worker clusters, even if $L = m$, not all $m$ workers belong to different clusters.
	Some clusters may include many workers, but others may not include any workers.
	The actual number of clusters is adaptively determined for each target dataset.
	Therefore, it is not good to increase the value of $L$, but there should be an appropriate value of $L$.
	The method of determining the value of $L$ using our theoretical analysis given in Section \ref{sec: th} is described in Section \ref{sec: ex_col}.
	
\section{Experiments} \label{sec: ex}
	We empirically analyzed the proposed method on synthetic and real-world data.
	First, we analyzed that the WC model proposed in Section \ref{sec: wc} performs better than the existing method in the {\it laissez-faire crowdsourcing setting} where the number of labels per worker is in accordance with  Zipf's law.
	Second, we investigated that there is a strong similarity between the lower bound of the minimax error rate derived by our theoretical analysis in Section \ref{sec: th} and the empirical error of the estimated value.
	
	We compared our proposed method, the WC model, with the classical Majority Voting  (MV) scheme and the DS model.
	The experiments were conducted on four synthetic and three real-world datasets.
	
	\subsection{Synthetic Data} \label{sec: ex_syn}
		For synthetic data, we generated $m = 100$ workers and $n = 1000$ binary annotation tasks.
		The ground truth of each task was sampled from the binomial distribution with the class prior $\rho = (0.3, 0.7)$.
		For each worker, a confusion matrix was generated as follows.
		Workers were divided into two types: honest and adversarial.
		Each row of each confusion matrix was sampled from the Dirichlet distribution.
		The concentration parameter of the Dirichlet distribution for the $k$-th row was 
		\vspace*{-4mm}
		\begin{eqnarray*}
			\alpha = (\underbrace{10,\dots,10}_{k-1},100,\underbrace{10,\dots,10}_{K-k})
		\end{eqnarray*}
		for honest workers and
		\vspace*{-3mm}
		\begin{eqnarray*}
			\alpha = (\underbrace{10,\dots,10}_{k-1},1,\underbrace{10,\dots,10}_{K-k})
		\end{eqnarray*}
		for adversarial workers.
		To simulate the labeling process of a crowd, we consider two settings:		
		The first setting is the ``Constant Labeling Number (CLN)'' setting , in which we determine the number $N$ of tasks that all workers commonly annotate, and then each worker is given $N$ tasks randomly and label those tasks in acordance with his or her own confusion matrix.
		The second setting is the ``Zipf Labeling Number (ZLN)'' setting, in which, for any $j \in \{1, \ldots, m\}$, we determines the number $N_j$ of tasks that the $j$-th worker  annotates in accordance with Zipf's law, and  then the $j$-th worker is randomly given $N_j$ tasks and labels those tasks according to his or her own confusion matrix. 
		The CLN setting was assumed in the experiments of many existing studies \citep[e.g.][]{Welinder_et_al_2010, Snow_et_al_2008}.
		The ZLN setting is suitable for the {\it laissez-faire crowdsourcing setting}.
		
		The input of each model was a set of triplets (task, worker, and label) generated by this procedure.
		The outputs of the WC and DS models were estimated values of the ground truths and class prior of all tasks and the confusion matrices of all workers.
		In order to measure how robust each method is to the proportion of adversaries, we showed the change in accuracy as a function of the proportion of adversaries (see Figure \ref{results}).
		We changed the proportion of adversaries from $0$ to $50$\%.
		
		The accuracy of MV monotonically decreases as the proportion of adversaries increases in all cases.
		In the CLN setting, when the number of times each worker labels is sufficiently large ($N = 100$), the accuracies of DS and WC remain high even if the proportion of adversaries is high (nearly $0.5$) (see Figure \ref{results}-(c)).
		In contrast, when the number of times each worker labels is small ($N = 2, 10$),   the accuracy of DS is low (almost the chance level), but that of WC remains high  (see Figure \ref{results}-(a), (b)).
		Intuitively, the reason for this phenomenon is that  WC effectively increased the number of labeled tasks per cluster by clustering workers and thus the estimation became more reliable.
		In particular, this phenomenon is noticeable when the number of tasks to be labeled per worker follows Zipf's law, i.e., in the ZLN setting (see Figure \ref{results}-(d)).
		Even if the average or the maximum value of the number of tasks to be labeled per worker is large, when most workers only label a small number, the accuracy of DS is not high.
		When the number of times most workers label is small, the proposed method WC is more effective than MV and DS.

	\subsection{Real-World Data} \label{sec: ex_re}
		In the real-world data experiments, we compared crowdsourcing algorithms on three datasets: two binary tasks and one multi-class task.
		The two binary tasks were labeling bird species \cite{Welinder_et_al_2010} (the Bird dataset) and recognizing textual entailment \cite{Snow_et_al_2008} (the RTE dataset).
		The multi-class task was labeling the breed of dogs \cite{Zhou_et_al_2012} (the Dog dataset).
		The statistics for the datasets were summarized in Table \ref{ta: st}.
		
		\begin{table}[t]
			\caption{Statistics of real-world datasets.}
				\label{ta: st}
				\vspace*{-3mm}
				\begin{center}
				\begin{small}
				\begin{sc}
					\begin{tabular}{lcccr}
						\toprule
						Dataset & $\sharp$classes  & $\sharp$tasks  & $\sharp$workers & $\sharp$labels  \\
						\midrule
						Bird & 2 & 108 & 39 & 4212 \\
						RTE & 2 & 800 & 164 & 8000\\
						Dog & 4 & 807 & 52 & 7354  \\
						\bottomrule
					\end{tabular}
				\end{sc}
				\end{small}
				\end{center}
				\vskip -0.3in
		\end{table}
				\begin{table}[t]
			\caption{Error rate (\%) of estimating the ground truths of tasks on real-world datasets. MV, DS, and WC stand for majority voting, Dawid and Skene, and worker clustering, respectively.}
				\label{ta: ac}
				\begin{center}
				\begin{small}
				\begin{sc}
					\begin{tabular}{lccr}
						\toprule
						Model \textbackslash Dataset & RTE & Bird & Dog \\
						\midrule
						MV  & 8.12 & 24.07 & 18.46\\
						DS & 7.62 & 15.74 & 16.60 \\
						WC ($L = 1$) &  11.37 & 19.44 & 18.71\\
						WC ($L = 2$) & 7.00 & 17.59& 18.59\\
						WC ($L = 10$) & {\bf 6.75} & {\bf 11.11} & 16.98\\
						WC ($min R(\rho, \pi)$) & {\bf 6.75} &{\bf 11.11} & {\bf 16.36} \\
						\bottomrule
					\end{tabular}
				\end{sc}
				\end{small}
				\end{center}
				\vskip -0.3in
		\end{table}
		
		We calculated the error rate of the WC model by changing the value of parameter $L$. 
		In Table \ref{ta: ac}, we reported the values for $L = 1, 2, 10$ and the value of the error rate when $R(\rho, \pi)$ attains its minimum.
		Table \ref{ta: ac} also shows the error rate calculated by MV and DS as baselines.
		It can be seen that the error rate decreases as the value of $L$ increases in the WC model.
		For the RTE dataset, the WC model is superior to both MV and DS when $L \ge 2$.
		For the Bird dataset, the WC model outperforms MV for all $L$ and DS when $L\ge10$.
		For the Dog dataset, the WC model does not outperform DS even when $L = 10$; as the value of $L$ is increased, the WC model outperforms DS when $R(\rho, \pi)$ is minimum.
		
		As expected, the performance of the WC model depends heavily on the choice of $L$.
		However, the experimental results show that larger $L$ always yields better performance. 
		We will investigate this issue more systematically in Section \ref{sec: ex}.3.
			
	\subsection{Similarity between Lower Bound of Minimax Error Rate and Empirical Error} \label{sec: ex_col}
		We experimentally investigated the similarity between the lower bound of the minimax error rate and the empirical error of the ground truths of tasks estimated by the WC model.
		
		It is important to note that the lower bound of the minimax error rate derived by our theoretical analysis can be approximately calculated from the class prior and confusion matrices estimated by each method, but the empirical error of the ground truths of tasks estimated by each method can not be calculated without the ground truths of those tasks.
		In practice, we want to adopt a model that makes the empirical error as small as possible, but its calculation is impossible because of the necessity of the ground truths of tasks.
		If there is some similarity between  the empirical error  and lower bound of the minimax error rate, we can estimate the behavior of each model by the theoretical lower bound even in a more realistic situation in which we do not know the ground truths of tasks.
		
		Here we experimentally investigate the behavior of $R(\rho, \pi)$ and $\mathscr{L}(\hat{G}, G)$ against the change in the value of the maximum number of worker clusters, $L$ for the WC model.
		 $R(\rho, \pi)$ is the main part of the lower bound of the minimax error rate derived by our theoretical analysis introduced in Section \ref{sec: th}, while $\mathscr{L}(\hat{G}, G)$ is the empirical error of the estimated truths of tasks by the WC  model.
		
		The experiment was conducted on three synthetic datasets and three real-world datasets.
		For each dataset, we plotted in Figure \ref{hoge} the change in $R(\rho, \pi)$ and $\mathscr{L}(\hat{G}, G)$ as a function of the parameter $L$.
		The value of $L$ was changed from $1$ to $m$.
		In Figure \ref{hoge}, the results of $\mathscr{L}(\hat G, G)$ and $R(\rho,\pi)$ are shown in the left and right columns, respectively. 
		
		For all cases, we see that increasing the value of $L$ tends to decrease the values of $ R (\rho, \pi) $ and $ \mathscr{L}(\hat{G}, G) $ similarly. 
		This means that the behavior of $\mathscr{L}(\hat{G},G)$ can be well predicted by $R(\rho,\pi)$.
		Looking at the change in $\mathscr{L}(\hat{G}, G)$, to achieve better performance, we can see that it is sufficient to set the value of $L$ to be large to some extent.
		This behavior can be well captured by looking at the change in $R(\rho, \pi)$ which can be computed in practice. 
		Thus, our theoretical lower bound may be useful to investigate the behavior of different models.
		
\section{Conclusions}
	In this paper, we gave a novel theoretical error analysis by using Fano's method for any models based on the ground truths of tasks and the confusion matrices of workers.
	We have considered a realistic {\it laissez-faire crowdsourcing setting} and proposed clustering workers to reduce the effect of the bias in the number of tasks to be labeled per worker.
	Our theoretical analysis is applicable to all models that use the ground truths of tasks and the confusion matrices of workers, including \citet{Dawid_Skene_1979} and its variations, thanks to the weak assumptions required in our analysis.
	Through experiments on synthetic and real-world data, we have found that there is a strong similarity between the lower bound of the minimax error rate derived by our theoretical analysis and the empirical error of the estimated value.
	Thus, our theoretical lower bound, which can be approximately calculated from an established class prior and estimated confusion matrices, can be useful to investigate the behavior of different models in practice.
\paragraph*{Acknowledgement}
IS was supported by JST CREST Grant Number JPMJCR17A1, Japan.
and MS was supported by KAKENHI 17H00757. 

\appendix
\section{Proof of Theorem \ref{thm: bound}} \label{sec: pr}

Fano's method \citep{Yu_1997} uses Fano's inequality \citep{Fano_1949} to give a lower bound for the minimax error rate.
In the method, the uniformity of a class prior is assumed.
However, it is not realistic for crowdsourcing.
Therefore, we do not use Fano's method directly but improve the method to be suitable for crowdsourcing.
We use the following inequality between the error probability and a conditional entropy.
\begin{lem} [Fano's inequality \citet{Fano_1949}] \label{lem: fanoo}
	\ \\
	For any Markov chain $V \rightarrow X \rightarrow \hat{V}$, we have
	\begin{eqnarray*}
		h (P(\hat{V} \neq V)) + P(\hat{V} \neq V) (\log (|\mathscr{V}| - 1) \ge H(V | \hat{V}),
	\end{eqnarray*}
	where $h(p) = - p \log p - (1 - p) \log (1 - p)$, 
	$\mathscr{V}$ is the set of possible value of $V$, and $H(V | \hat{V})$ is the entropy of $V$ conditioned on $\hat{V}$.
\end{lem}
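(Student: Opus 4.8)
The plan is to prove Fano's inequality by introducing an error indicator and expanding a single joint conditional entropy in two different ways via the chain rule. First I would define the binary random variable $E = \delta(\hat{V} \neq V)$ and write $P_e = P(\hat{V} \neq V)$ for the error probability, so that $E$ is Bernoulli with parameter $P_e$. The central object is $H(V, E \mid \hat{V})$, which I will decompose using the chain rule in the two natural orderings and then equate.

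The first decomposition reads $H(V, E \mid \hat{V}) = H(V \mid \hat{V}) + H(E \mid V, \hat{V})$. Here I would observe that $E$ is a deterministic function of the pair $(V, \hat{V})$, so the final term vanishes, $H(E \mid V, \hat{V}) = 0$. This identifies $H(V, E \mid \hat{V})$ with the quantity $H(V \mid \hat{V})$ appearing on the right-hand side of the claim.

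The second decomposition reads $H(V, E \mid \hat{V}) = H(E \mid \hat{V}) + H(V \mid E, \hat{V})$, and I would bound each summand. For the first, since conditioning cannot increase entropy, $H(E \mid \hat{V}) \le H(E) = h(P_e)$. For the second, I would split on the value of $E$: on the event $E = 0$ we have $V = \hat{V}$, hence $H(V \mid E = 0, \hat{V}) = 0$; on the event $E = 1$ the variable $V$ takes one of at most $|\mathscr{V}| - 1$ values, all distinct from $\hat{V}$, hence $H(V \mid E = 1, \hat{V}) \le \log(|\mathscr{V}| - 1)$. Averaging over $E$ gives $H(V \mid E, \hat{V}) \le P_e \log(|\mathscr{V}| - 1)$.

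Combining the two decompositions yields $H(V \mid \hat{V}) \le h(P_e) + P_e \log(|\mathscr{V}| - 1)$, which is exactly the asserted inequality after rearranging. I do not anticipate a genuine obstacle, as each step is a routine entropy manipulation; the one point requiring care is the bound $H(V \mid E = 1, \hat{V}) \le \log(|\mathscr{V}| - 1)$, where one must confirm that conditioning on $E = 1$ removes $\hat{V}$ from the admissible support of $V$ so that at most $|\mathscr{V}| - 1$ outcomes remain and the uniform bound on entropy applies. It is worth noting that the Markov-chain hypothesis $V \to X \to \hat{V}$ is not actually invoked; the inequality is a purely information-theoretic statement about the joint law of $(V, \hat{V})$, and the Markov structure only records how the bound is later used in the minimax argument.
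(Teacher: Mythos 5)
Your proof is correct and is the canonical chain-rule argument for Fano's inequality---expanding $H(V, E \mid \hat{V})$ in both orders, using $H(E \mid V, \hat{V}) = 0$, $H(E \mid \hat{V}) \le h(P_e)$, and $H(V \mid E = 1, \hat{V}) \le \log(|\mathscr{V}| - 1)$---which is exactly the standard proof the paper defers to its supplementary Appendix A rather than reproducing. Your two side remarks are also accurate: the cardinality bound does require the estimator to take values in $\mathscr{V}$ (satisfied here, since $\hat{G} \in [K]^n$), and the Markov-chain hypothesis $V \rightarrow X \rightarrow \hat{V}$ plays no role in the lemma itself but is invoked only later, through the data-processing inequality $I(G; \hat{G}) \le I(G; X)$, in the paper's proof of Theorem \ref{thm: bound}.
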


The details of the proof are given in Appendix A in supplementary material.
Using this inequality, we prove Theorem \ref{thm: bound} as follows.
\vspace*{-3mm}
\begin{proof}[Proof of Theorem \ref{thm: bound}]
	\ \\
	First, using Markov's inequality, we have
	\vspace*{-3mm}
	\begin{align*}
		\inf_{\hat{G}} \sup_{G \in [K]^n} \mathbb{E} [\mathscr{L}(\hat{G}, G)] 
		&\ge \inf_{\hat{G}} \sup_{G \in [K]^n} \frac{1}{n}\mathbb{P} \left[ \left( \mathscr{L}(\hat{G}, G) \ge \frac{1}{n} \right) \right] \\
		&= \inf_{\hat{G}} \left[ \frac{1}{n} \sup_{G \in [K]^n} \mathbb{P} \left(  \mathscr{L}(\hat{G}, G) \ge \frac{1}{n} \right)\right] \\
	\end{align*}
	\vspace*{-1mm}
	Since $\hat{G} \neq G$ and $\mathscr{L}(\hat{G}, G) = \frac{1}{n} \sum_{i=1}^n \delta(\hat{G}_i \neq G_i) \ge \frac{1}{n}$ are equivalent, we have
	\vspace*{-3mm}
	\begin{eqnarray}
		\inf_{\hat{G}} \sup_{G \in [K]^n} \mathbb{E} [\mathscr{L}(\hat{G}, G)] 
		&\ge& \inf_{\hat{G}} \left[ \frac{1}{n}  \sup_{G \in [K]^n} \mathbb{P} \left(  \mathscr{L}(\hat{G}, G) \ge \frac{1}{n} \right)\right] \nonumber \\
		&=& \inf_{\hat{G}} \left[ \frac{1}{n}  \sup_{G \in [K]^n} \mathbb{P} \left(  \hat{G} \neq G\right)\right]. \label{preq1} 
	\end{eqnarray}
	
	Second, we evaluate $\mathbb{P} \left(  \hat{G} \neq G\right)$ by using Lemma (\ref{lem: fanoo}), i.e., 
	\vspace*{-3mm}
	\begin{align}
		 \mathbb{P}\left(  \hat{G} \neq G\right) \nonumber 
		&\ge \mathbb{P}\left(  \hat{G} \neq G\right) \frac{\log (K^n - 1)}{\log K^n}\nonumber \\
		&\ge   \frac{\mathbb{P}\left(  \hat{G} \neq G\right) \log (K^n - 1) + h (P(\hat{G} \neq G)) - \log 2}{\log K^n}\nonumber\\
		&\ge \frac{H(G | \hat{G}) - \log 2}{n \log K}. \label{preq2}
	\end{align}
	Third, we evaluate $H(G | \hat{G})$ by using the relationship between conditional entropy and mutual information, and the data processing inequality, i.e., 
	\begin{eqnarray*}
		H(G | \hat{G}) &=& H(G) - I(G ; \hat{G}) \\
		&\ge& H(G) - I(G ; X).
	\end{eqnarray*}
	Since the elements of $G = (G_1, \ldots, G_n)$ are independent and identically distributed, we have
	\begin{eqnarray}
		H(G | \hat{G}) \ge   n H(\rho) - I(G ; X) \label{preq3}
	\end{eqnarray}
	Then the mutual information of $G$ and $X$ is evaluated as follows.
	\begin{align*}
		&\:I(G; X) \nonumber \\
		&= KL(P(G, X) \left\|  P(G)P(X)) \right.\\
		&= KL(P(X| G)P(G) || P(G)P(X))\\
		&= \sum_{G \in [K]^n, X \in [K]^{mn}} P(G)P(X|G) \log \frac{P(G)P(X|G)}{P(G)P(X)}\\
		&= \sum_{G \in [K]^n} P(G) KL(P(X | G) || P(X)) \\
		&= \sum_{G \in [K]^n} P(G) KL\left( P(X | G) \left\| \sum_{G' \in [K]^n} P(X | G') P(G') \right.\right)  \\
		&\le\sum_{G \in [K]^n} P(G) \sum_{G' \in [K]^n} P(G') KL\left( P(X | G) || P(X | G') \right) \\
		&\: ( \mbox{by convexity of} - \log ) \\
		&= \sum_{G \in [K]^n} \sum_{G' \in [K]^n} P(G) P(G') \left( \sum_{i=1}^n \sum_{j=1}^m KL\left( P(X_{i,j} | G) || P(X_{i,j} | G') \right) \right) \\
		&\: ( \mbox{by independence of } X ) \\
		&= \sum_{G \in [K]^n} \sum_{G' \in [K]^n} \sum_{i=1}^n \sum_{j=1}^m P(G) P(G') KL(\pi^j_{G_i *} || \pi^j_{G'_i *} )  \\
		&= \sum_{i=1}^n \sum_{j=1}^m \sum_{g = 1}^K \sum_{g' = 1}^K P(G_i = g) P(G'_i = g') KL(\pi^j_{g *} || \pi^j_{g' *} )  \\
		&= n \sum_{j=1}^m \sum_{g = 1}^K \sum_{g' = 1}^K \rho_g \rho_{g'} KL(\pi^j_{g *} || \pi^j_{g' *} ).  \\
	\end{align*}
	Combining this inequality and Inequality (\ref{preq3}), we have
	\begin{eqnarray}
		H(G | \hat{G}) &\ge& n \left(H(\rho) -  \sum_{j=1}^m \sum_{g = 1}^K \sum_{g' = 1}^K \rho_g \rho_{g'} KL(\pi^j_{g *} || \pi^j_{g' *} ) \right) \nonumber \\ 
		&=& n R(\rho, \pi) \label{eq: two}. \label{preq4}
	\end{eqnarray}
	
	Finally, using Inequality (\ref{preq1}), (\ref{preq2}), and (\ref{preq4}), we get
	\begin{align*}
		\inf_{\hat{G}} \sup_{G \in [K]^n} \mathbb{E} [\mathscr{L}(\hat{G}, G)] 
		&\ge \inf_{\hat{G}} \left[ \frac{1}{n} \sup_{G \in [K]^n}  \frac{1}{\log K} \left( R(\rho, \pi) - \frac{\log 2}{n}\right)\right] \\
		&= \frac{1}{n \log K} \left( R(\rho, \pi) - \frac{\log 2}{n}\right)
	\end{align*}
	This completes the proof.
\end{proof}

	\begin{figure*}[t]
\begin{center}
$
\begin{array}{cccc}
\includegraphics[height = 0.4\columnwidth]{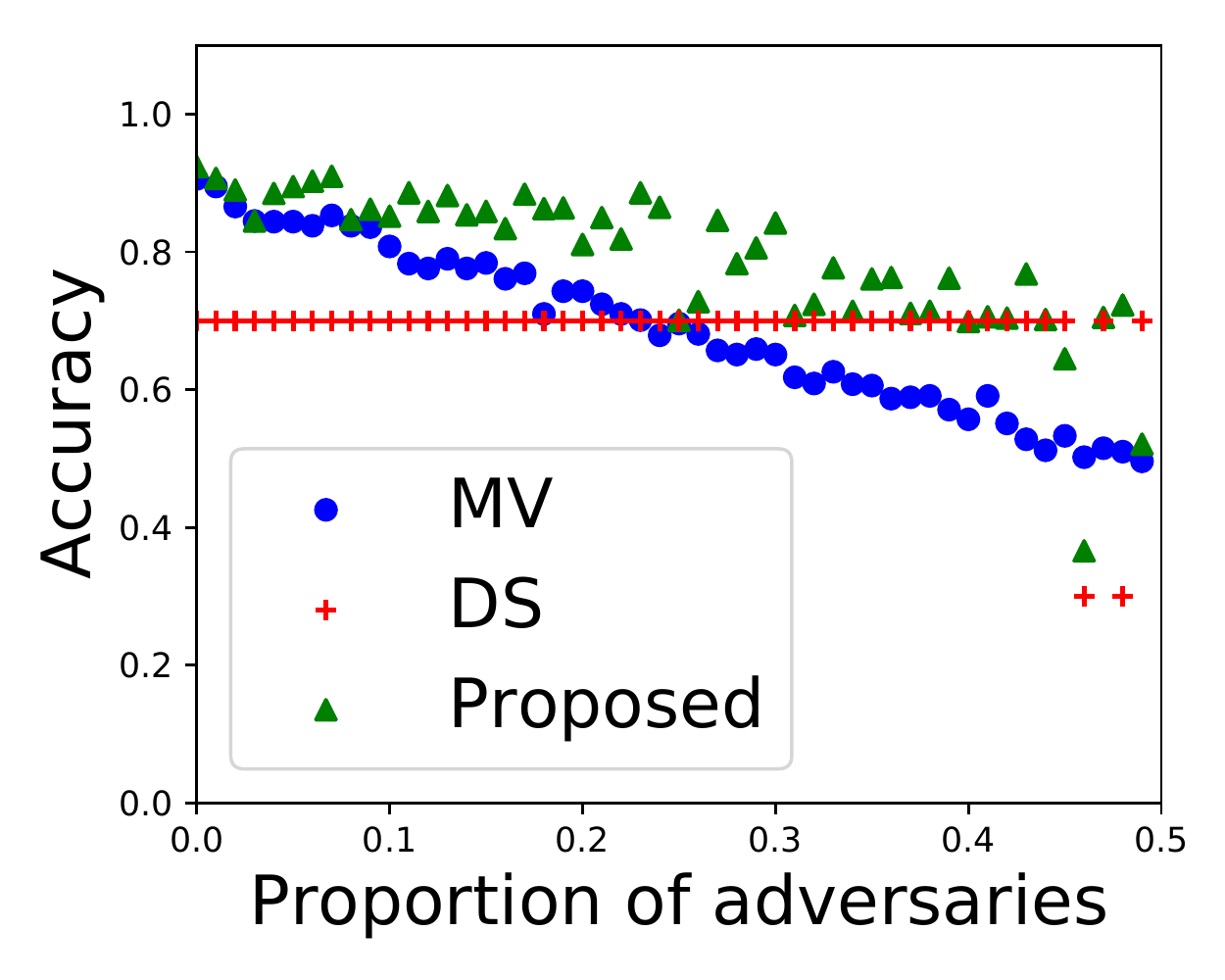}&
\includegraphics[height = 0.4\columnwidth]{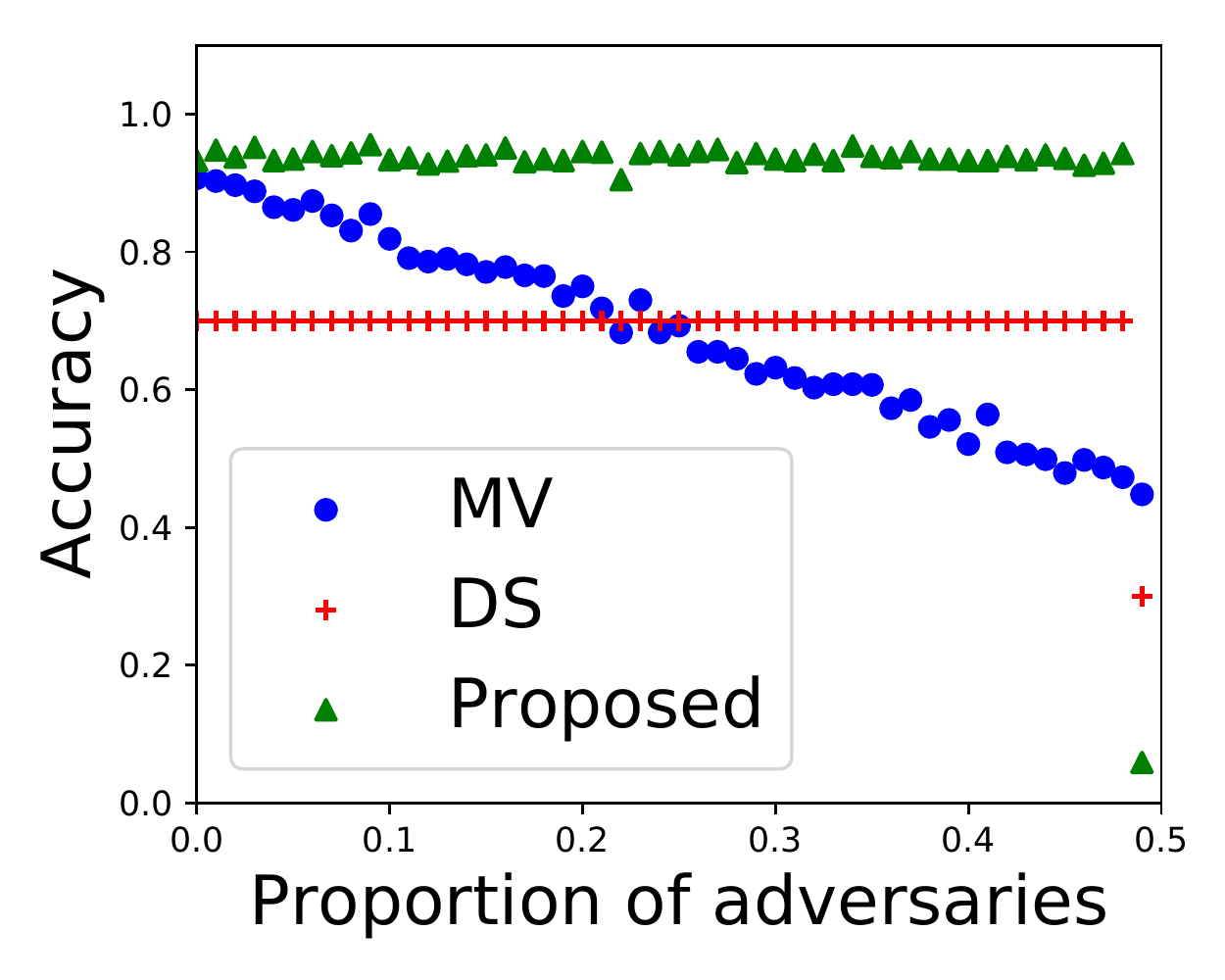}&
\includegraphics[height = 0.4\columnwidth]{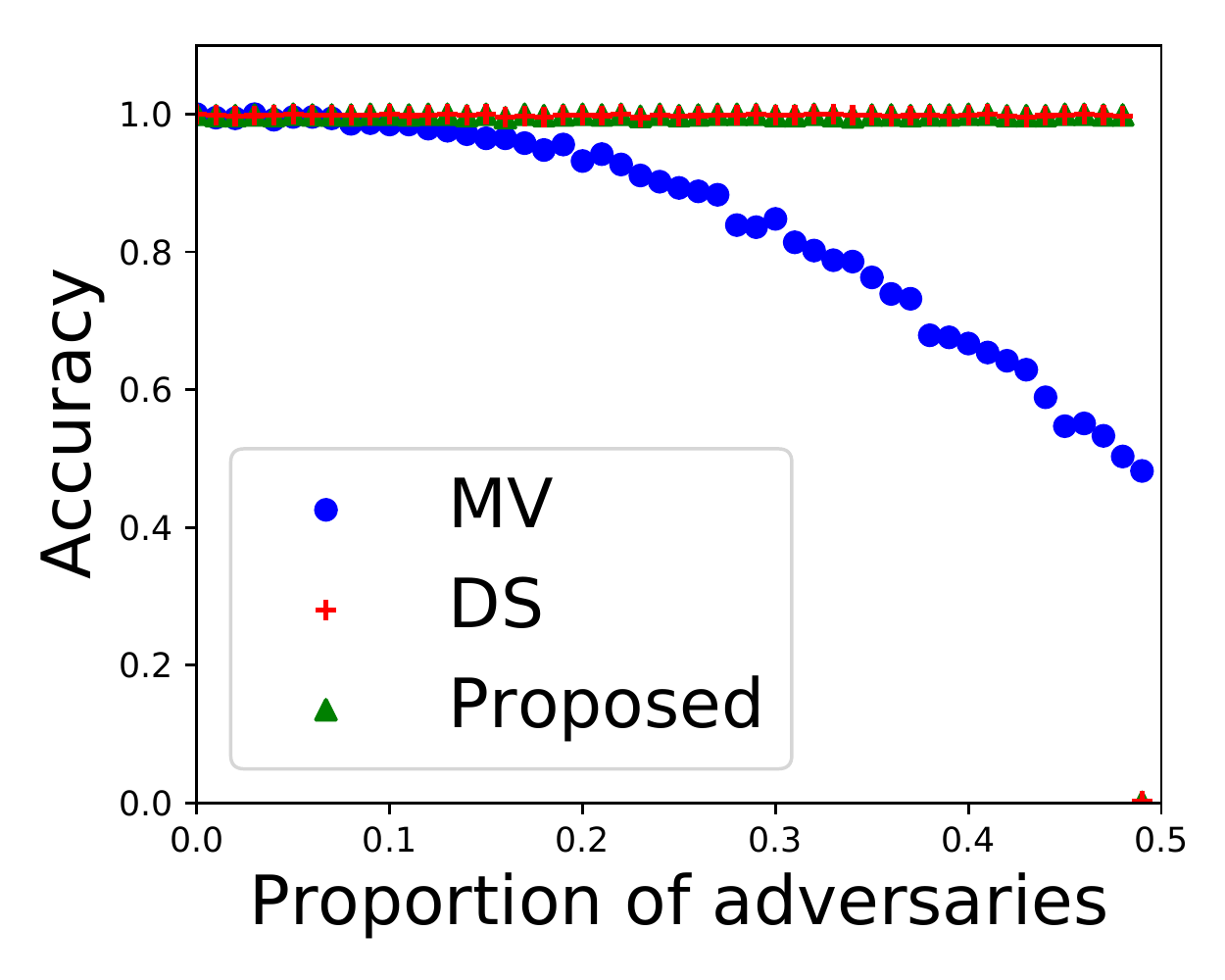}&
\includegraphics[height = 0.4\columnwidth]{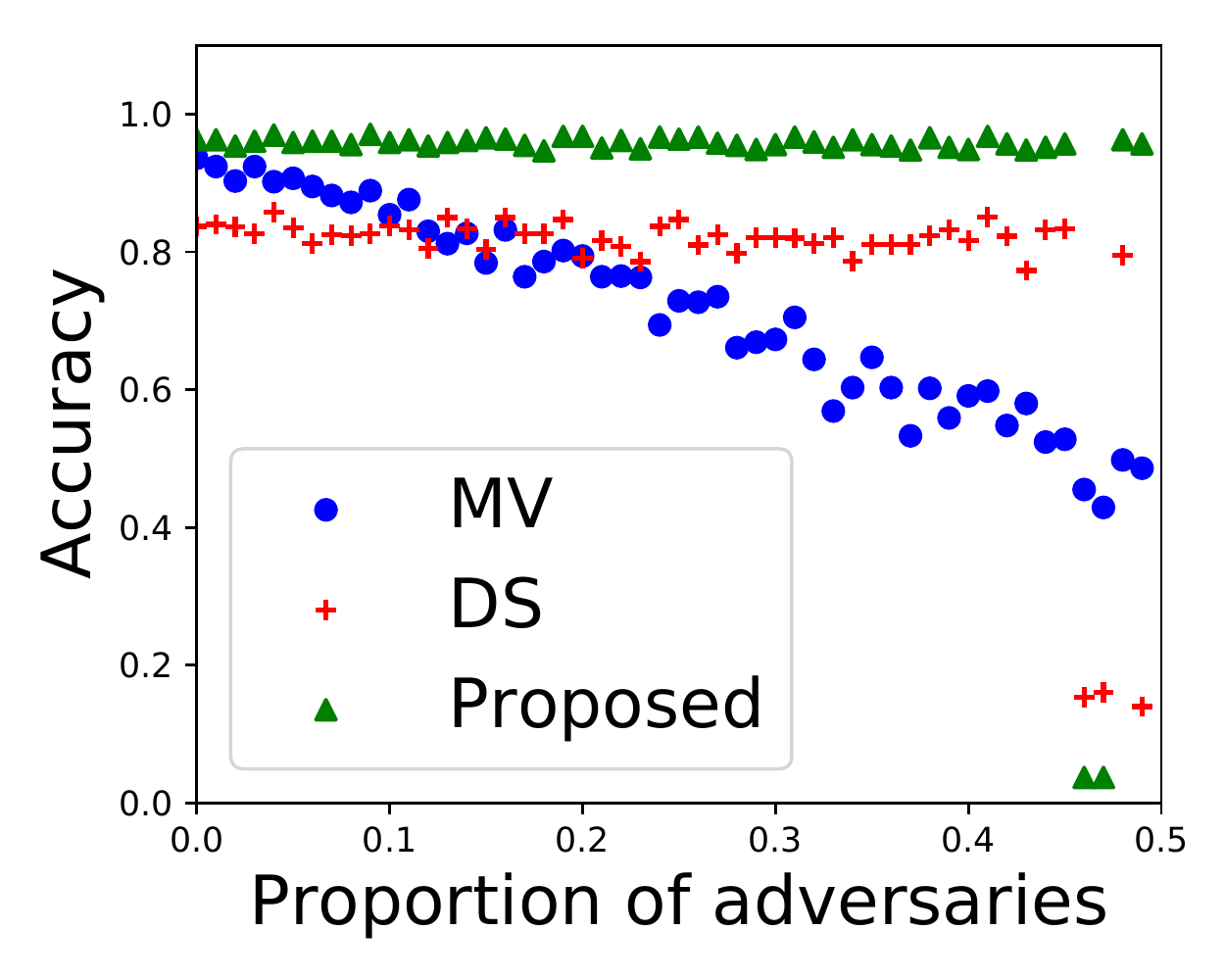}
\\
({\rm a}) {\rm CLN }\: (N = 2)
&
({\rm b}) {\rm CLN }\: (N = 10)
&
({\rm c}) {\rm CLN }\: (N = 100)
&
({\rm d}) {\rm ZLN \:}  (N = 1, \ldots, 60)
\end{array}
$
\end{center}
\caption{
Change of accuracy as a function of the proportion of adversaries.
MV, DS, and WC stand for majority voting, Dawid and Skene, and worker clustering, respectively.
}
\label{results}
\end{figure*}

\begin{figure*}[t]
\begin{center}
$
\begin{array}{cc}
\includegraphics[width = \columnwidth]{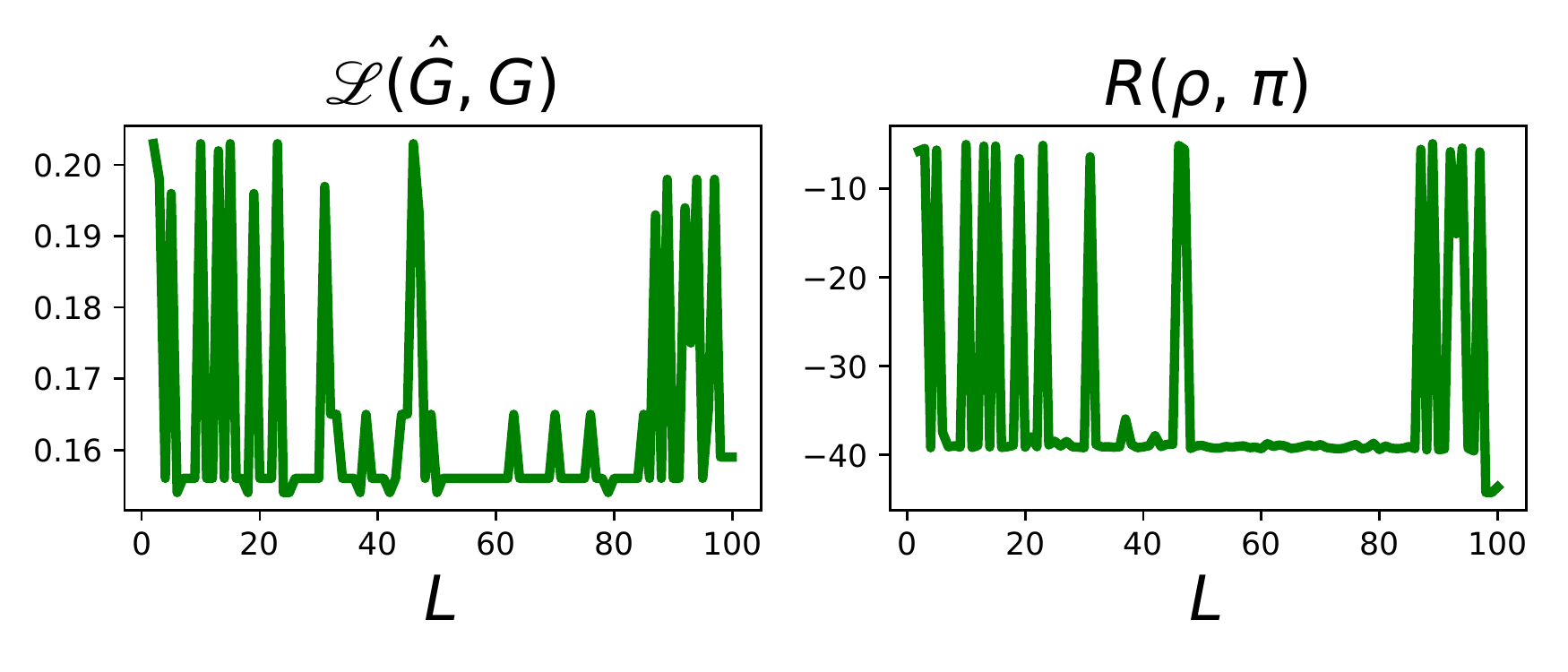}&
\includegraphics[width = \columnwidth]{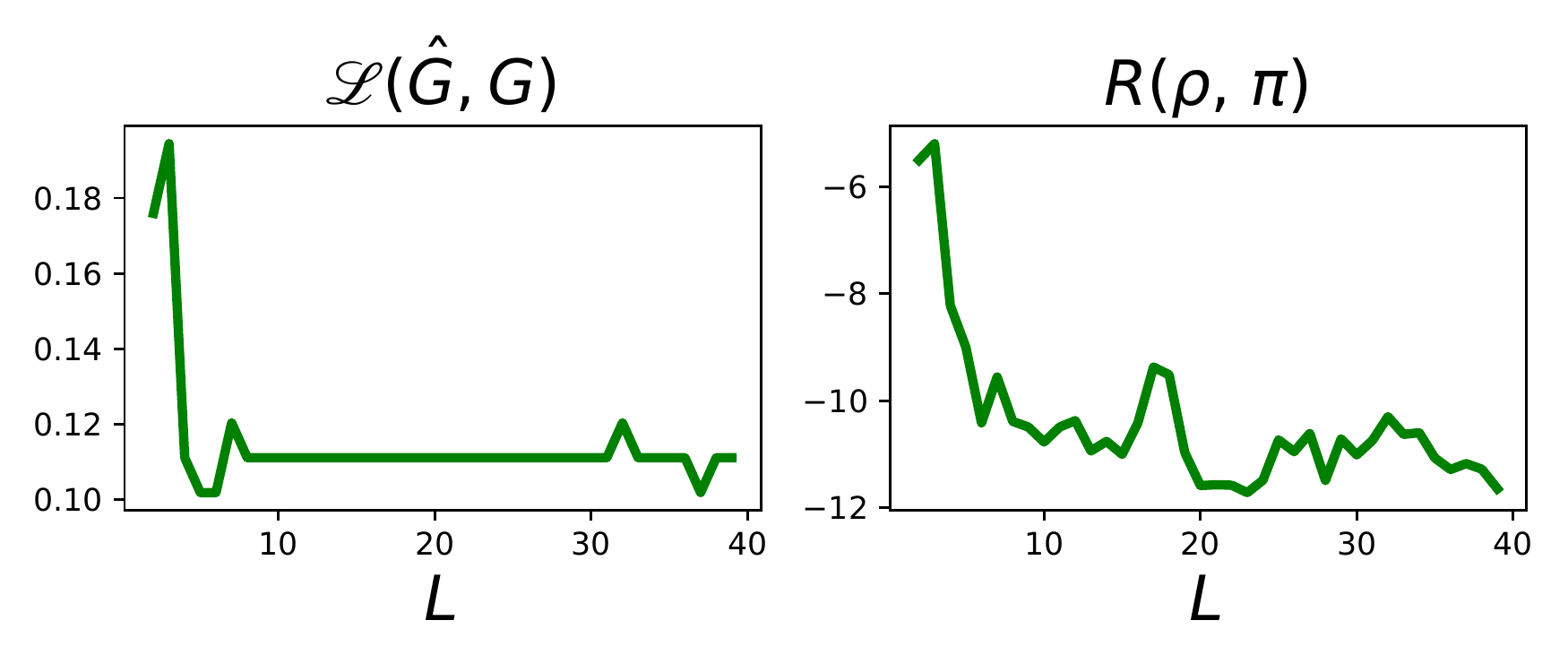}
\\
({\rm a})\: {\rm CLN }\: (N = 2)
&
({\rm d})\: {\rm Bird \: dataset }
\\ \
\\ \
\\
\includegraphics[width = \columnwidth]{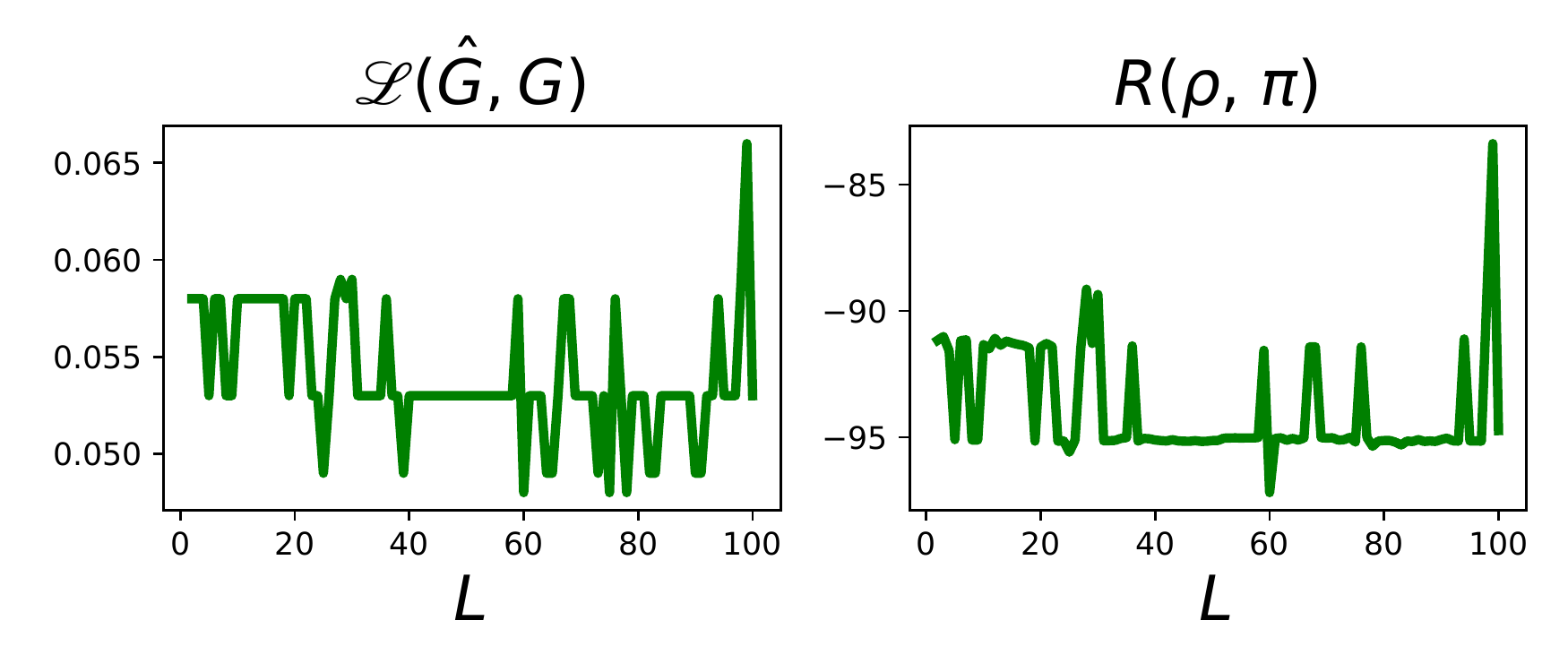}&
\includegraphics[width = \columnwidth]{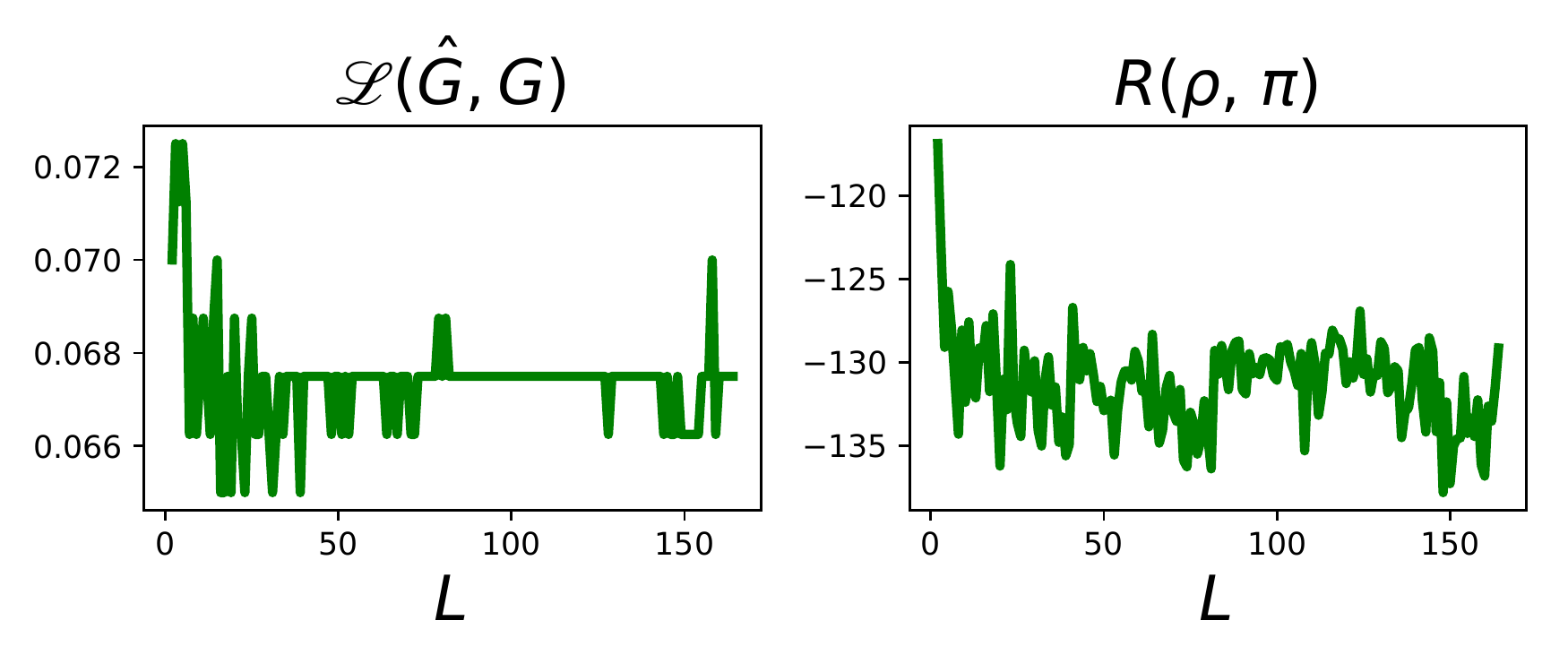}
\\
({\rm b})\: {\rm CLN }\: (N = 10)
&
({\rm e})\: {\rm RTE \: dataset }
\\ \
\\ \
\\
\includegraphics[width = \columnwidth]{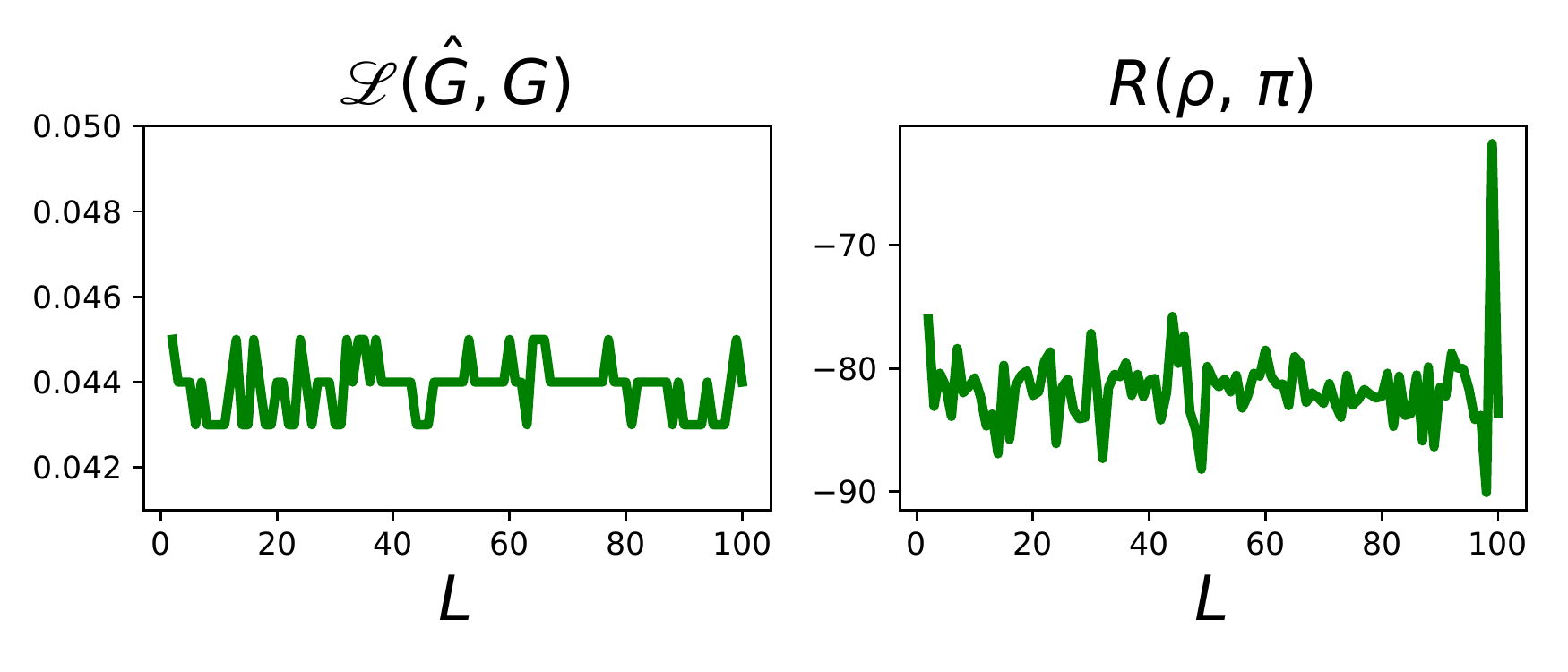}&
\includegraphics[width = \columnwidth]{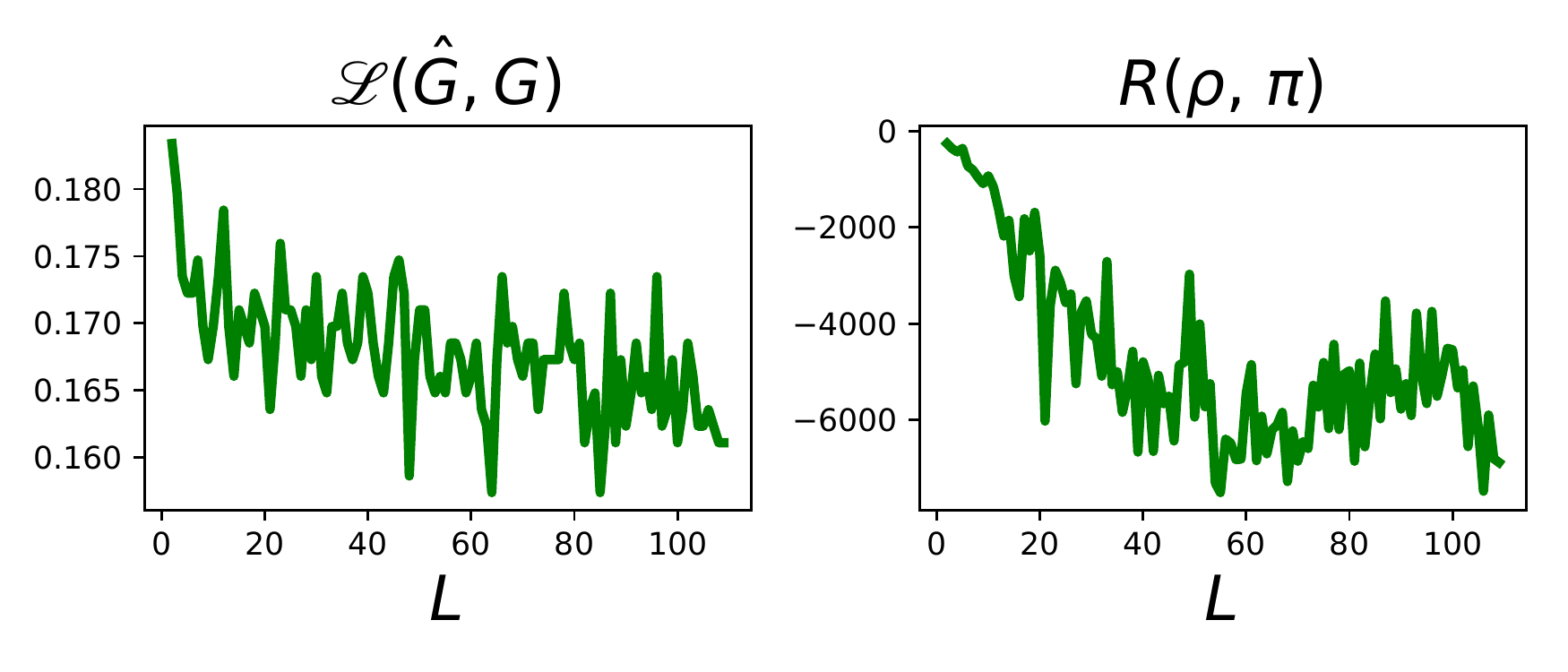}
\\
({\rm c})\: {\rm ZLN }\: (Z = 1, \ldots, 60)
&
({\rm f})\: {\rm Dog \: dataset }
\end{array}
$
\end{center}
\caption{
Change in the empirical error $\mathscr{L}(\hat{G}, G)$ and the main part of the lower bound of the minimax error rate $R(\rho, \pi)$ for the maximum number of worker clusters $L$.
}
\label{hoge}
\end{figure*}

\clearpage

\bibliography{ref}
\bibliographystyle{icml2018}

\end{document}